\documentclass{article}
\usepackage{authblk}
\usepackage{arxiv}

\usepackage[utf8]{inputenc}
\usepackage[T1]{fontenc}
\usepackage{hyperref}
\usepackage{url}
\usepackage{booktabs}
\usepackage{amsfonts}
\usepackage{nicefrac}
\usepackage{microtype}

\usepackage{amsmath}
\usepackage{amsthm}
\usepackage{cleveref}
\usepackage{lipsum}
\usepackage{graphicx}
\usepackage{natbib}
\usepackage{doi}
\usepackage{float}

\usepackage{times}
\usepackage{subfigure} 
\usepackage{natbib}
\usepackage[nobiblatex]{xurl}

\usepackage{algpseudocode}
\usepackage{algorithm}
\usepackage{algorithmicx}
\usepackage{enumerate}
\usepackage{bbm}
\usepackage{enumitem}
\usepackage{calc}
\usepackage{hyperref}

\usepackage{rotating}
\usepackage{threeparttable}
\usepackage{placeins}
\usepackage{xspace}
\usepackage{comment}

\newtheorem{theorem}{Theorem}[section]
\newtheorem{definition}[theorem]{Definition}
\newtheorem{proposition}[theorem]{Proposition}
\newtheorem{lemma}[theorem]{Lemma}
\newtheorem{assumption}[theorem]{Assumption}

\title{Conformal predictions for longitudinal data}

\date{}

\author[1]{Devesh Batra}
\author[1,$\star$]{Salvatore Mercuri}
\author[1,2]{Raad Khraishi}
\affil[1] {%
	Data Science \& Innovation\\
	NatWest Group\\
	London\\
	United Kingdom$\textsuperscript{\dag}$}
\affil[2] {%
	Institute of Finance and Technology\\
	UCL\\
	London\\
	United Kingdom$\textsuperscript{\ddag}$}

\DeclareMathOperator*{\argmin}{\text{argmin}}
\DeclareMathOperator*{\Cov}{\text{Cov}}
\DeclareMathOperator*{\Var}{\text{Var}}
\newcommand{\LPCI}{\text{LPCI}\xspace}
\newcommand{\SPCI}{\text{SPCI}\xspace}
\newcommand{\CQR}{\text{CQR}\xspace}
\newcommand{\TQAB}{\text{TQA-B}\xspace}
\newcommand{\TQAE}{\text{TQA-E}\xspace}
\newcommand{\COVID}{\textsc{covid}\xspace}
\newcommand{\EEG}{\textsc{eeg}\xspace}
\newcommand{\train}{\text{train}\xspace}
\newcommand{\test}{\text{test}\xspace}

\usepackage{array}
\setlength\extrarowheight{3pt}

\begin{document} 

\maketitle
\begingroup\def\thefootnote{$\star$}\footnotetext{No longer at institute\textsuperscript{1}}\endgroup
\begingroup\def\thefootnote{$\dag$}\footnotetext{\thanks{Correspondence to: devesh.batra@natwest.com}}\endgroup
\begingroup\def\thefootnote{$\ddag$}\footnotetext{\thanks{Correspondence to: raad.khraishi@ucl.ac.uk}}\endgroup

\begin{abstract} 
We introduce Longitudinal Predictive Conformal Inference (\LPCI), a novel distribution-free conformal prediction algorithm for longitudinal data. Current conformal prediction approaches for time series data predominantly focus on the univariate setting, and thus lack cross-sectional coverage when applied individually to each time series in a longitudinal dataset. The current state-of-the-art for longitudinal data relies on creating infinitely-wide prediction intervals to guarantee both cross-sectional and asymptotic longitudinal coverage. The proposed \LPCI method addresses this by ensuring that both longitudinal and cross-sectional coverages are guaranteed without resorting to infinitely wide intervals. In our approach, we model the residual data as a quantile fixed-effects regression problem, constructing prediction intervals with a trained quantile regressor. Our extensive experiments demonstrate that \LPCI achieves valid cross-sectional coverage and outperforms existing benchmarks in terms of longitudinal coverage rates. Theoretically, we establish \LPCI's asymptotic coverage guarantees for both dimensions, with finite-width intervals. The robust performance of \LPCI in generating reliable prediction intervals for longitudinal data underscores its potential for broad applications, including in medicine, finance, and supply chain management.
\end{abstract} 

\keywords{Conformal predictions \and longitudinal data}

\section{Introduction}\label{sec:introduction}

The improvement in predictive performance of machine learning models over the last two decades have made them essential components of decision-making pipelines across high-stake domains such as medicine and finance. However, the point estimates yielded by these predictive models are insufficient in these critical application domains, where uncertainty estimates are of particular interest for informed decision-making (see  \cite{harries1999splice, DazGonzlez_2012, Cochran_2015} for uncertainty quantification in these domains). While post-hoc methods such as bootstrapping, jackknife and other ensembling procedures (see \cite{Alaa_2020, Schaar_2020, Xu_2020}) are popularly used for uncertainty estimation of a particular statistic (such as model metrics), they are only able to provide theoretical guarantees under additional assumptions on the underlying model and data distribution. This limitation, however, is addressed by the conformal prediction framework, which provides a principled way to perform model-agnostic and distribution-free uncertainty quantification of complex machine learning models. 

Conformal predictions are a powerful tool for constructing prediction sets or intervals that provide reliable coverage guarantees for the true value. These guarantees are typically based on the assumption of data exchangeability, which is often violated in time series data due to temporal dependencies and non-stationarity. However, accurate uncertainty quantification is crucial for time series data, which is central to applications ranging from medical diagnosis to energy demand and stock market forecasting. Consequently, there has been a growing interest in developing conformal prediction methods that can handle non-exchangeable data. Most of the focus for time series data has been on univariate data, with \cite{Xu_2022} recently developing the Sequential Predictive Conformal Inference (\SPCI) method and showing its asymptotic conditional coverage.

In this work, we study conformal predictions for longitudinal data, which consists of a number of uniquely identified time series. This is often the format of time series data seen in applications in medicine and finance, where we might have a time series for each patient, stock, or customer. In this context, predictions can be made longitudinally, forecasting future values for each time series, or cross-sectionally, predicting values for new, as yet unseen, time series. Consequently, when developing a conformal prediction framework for longitudinal data, two types of coverage need to be addressed: longitudinal coverage, which measures coverage over time for each series, and cross-sectional coverage, which measures coverage over the population for each time-point. Whilst existing methods designed for univariate time series can be applied to this setting (e.g., by modelling each time series individually), they fail to provide cross-sectional coverage guarantees. Moreover, longitudinal coverage guarantees that are asymptotic in the length of the time series, as in the \SPCI method \citep{Xu_2022}, can be suboptimal for longitudinal data with a large population and relatively short time series. Hence, it is imperative to develop conformal prediction frameworks that leverage data from across both longitudinal and cross-sectional axes to construct prediction intervals. Previous work for longitudinal data is limited, with \cite{Kamile_2021} and  \cite{Lin_2022} being notable examples. \cite{Kamile_2021} showed only cross-sectional coverage guarantees, while \cite{Lin_2022} recognised the need to consider both cross-sectional and longitudinal coverage. The \TQAB and \TQAE methods \citep{Lin_2022} improved longitudinal coverage empirically, with \TQAE even providing a theoretical guarantee. However \TQAE suffered from the creation of infinitely-wide intervals in up to 4\% of time-points in experiments \cite[Table 7]{Lin_2022}.

To overcome this, we propose Longitudinal Predictive Conformal Inference (\LPCI), extending the \SPCI framework \citep{Xu_2022} to longitudinal data, which obtained asy mptotic longitudinal coverage of univariate time series without creating infinitely-wide intervals. This involves modelling the conformal scores (in this case the residuals of predictions) as a quantile fixed-effects regression problem, and using this trained quantile regressor to construct prediction intervals. We make further modifications to the underlying approach in order to improve empirical performance, for example by taking exponentially weighted averages in the quantile regressors training data, rather than simple lags as is done in \cite{Xu_2022}. We prove asymptotically that \LPCI achieves theoretical cross-sectional and longitudinal coverage without the creation of infinitely-wide intervals. Our experiments show that \LPCI observes the expected cross-sectional coverage, and longitudinal coverage is greater than or competitive with \TQAB and \TQAE. We also compare to using \SPCI separately on each time series as well as to Conformalized Quantile Regression (\CQR) of \cite{Romano_2019}, with \LPCI outperforming both. Finally, we also show that the widths of the prediction intervals of \LPCI are more adaptive than \CQR, by which we mean that \LPCI produces narrower intervals where the model is more certain and wider intervals where the model is less certain, making it more suitable as a measure of uncertainty for longitudinal data.

\subsection{Related work}
The conformal prediction framework has gained significant traction and has become increasingly popular in the last few years for distribution-free and model-agnostic uncertainty quantification \cite{Angelopoulos_2022}. Originating from work by \cite{Vovk_2005}, \cite{Shafer_2008}, and \cite{Papadopoulos_2008}, the Split conformal method of \cite{Jing_2015} pioneered the type of distribution-free and model-agnostic conformal predictions that are currently prevalent \citep{Gupta_2022, Angelopoulos_2022a, Kivaranovic_2020}. In Split conformal methods, conformal scores are computed on a calibration set and used to measure score quantiles and create intervals. The only assumption on the data is exchangeability, which ensures that the scores calculated on the random calibration set for determining the quantile of the non-conformity scores are transferable to unseen data.

There has been relatively little work on conformal predictions for the cases where exchangeability does not hold, however this is an increasing area of focus. 
Much of the earlier relevant methods in this area addressed the more general problem of distributional shift, which may be used to make conformal predictions in an online learning setting. 
\cite{Tibshirani_2019} considered the covariate shift problem.
\citet{Gibbs_2021} developed a conformal predictive framework to apply to distributional shifts in an online setting, and \citet{Feldman_2022} extended this methodology to apply to arbitrary distributional shifts, making no assumptions on the data distribution.
As noted by several researchers, the general framework of \citet{Gibbs_2021} may be used for time series data, and the specific temporal dependence of such data can be leveraged to fine-tune and improve this approach \citep{Zaffran_2022}.
A common approach to improve the coverage of prediction intervals is to perform leave-one-out (LOO) ensemble predictions to feed into the non-conformal scores, as is done by \citet{Xu_2020} and \cite{Jensen_2022}. 
Whilst these methods may empirically improve longitudinal coverage, it can be difficult to prove asymptotic longitudinal coverage theoretically without additional distributional assumptions. 
The \SPCI algorithm of \cite{Xu_2022} was shown to achieve longitudinal asymptotic coverage for a single \emph{univariate} time series without any additional distributional assumptions.
This was achieved through the introduction of a quantile random forest regressor to predict the required quantile of the non-conformal scores.
Moreover, the \SPCI algorithm improved on the width of the prediction intervals over the former EnbPI method by the same authors \citep{Xu_2020}.
The ideas of \SPCI in using quantile regressors built on previous examples, such as \cite{Romano_2019}, where the authors applied quantile regression to construct intervals for exchangeable data, a method known as Conformalized Quantile Regression (\CQR).

Research for multivariate time series data is limited. 
Nonetheless, longitudinal time series data is considered by \citet{Kamile_2021, Lin_2022}. \citet{Kamile_2021} focused on multi-step forecasting of future time-points for each group. 
They proved a cross-sectional coverage guarantee, but did not introduce any guarantees on the conditional longitudinal coverage of each individual time series.
\cite{Lin_2022} considered the cross-sectional setting, where an estimator is trained on a longitudinal dataset, and developed a conformal prediction algorithm that produces prediction intervals for an \emph{unseen} group. 
\citet{Lin_2022} showed cross-sectional coverage, however longitudinal coverage was only guaranteed with the creation of infinitely-wide intervals at certain time-points.
Our work differs by introducing a conformal prediction framework for longitudinal data that guarantees coverage with finite-width intervals.

\section{Methodology}\label{sec:methodology}

Longitudinal datasets consist of temporal sequences of observations $\{(X_t^{(g)}, Y_t^{(g)})\}_{t=1}^T$ of length $T > 1$, for each group $g\in G = \{1, \dots, |G|\}$, where $Y_t^{(g)}\in\mathbb{R}$ is continuous scalar and $X_t^{(g)}\in\mathbb{R}^d$ consists of features which may contain exogenous features, dates, group identifiers or lags and moving averages of $Y_t^{(g)}$. We are assuming that the dataset is balanced in the sense that there exist observations for each combination of $g$ and $t$ 

We distinguish two types of study for such datasets: \textbf{cross-sectional} and \textbf{longitudinal}. In \textbf{cross-sectional studies} we partition the groups $G = G_{\train}\sqcup G_{\test}$ and train an estimator $\hat{f}$ on $\{(X_t^{(g)}, Y_t^{(g)})\mid 1\leq t\leq T, g\in G_{\train}\}$; point predictions $\hat{Y}_t^{(g)} := \hat{f}(X_t^{(g)})$ are made on the test groups $g\in G_{\test}$ for the same time-points $1\leq t\leq T$ seen in the training set. In \textbf{longitudinal studies} $G = G_{\train} = G_{\test}$ and we train estimator $\hat{f}$ on all groups, $\{(X_t^{(g)}, Y_t^{(g)})\mid 1\leq t\leq T, g\in G\}$; point predictions $\hat{Y}_{T+k}^{(g)} := \hat{f}(X_t^{(g)})$ are made across all groups $g\in G$ at future time-points $k\geq 1$.

In conformal predictions, point predictions are accompanied by prediction intervals $\widehat{C}_{t-1}(X_t^{(g)})$ which should ideally contain the true value $Y_t^{(g)}$ with probability $1-\alpha$ for each group, where $\alpha\in(0, 1)$ is a pre-specified significance level. Such intervals are constructed by measuring (non-)conformity of predictions with the true value. A common scoring function to do so in regression is the residual:
\begin{equation} \label{eq:panel_eps}
	\hat{\epsilon}_t^{(g)} := Y_t^{(g)} - \hat{Y}_t^{(g)}\in\mathbb{R}.
\end{equation}
In traditional conformal predictions, a theoretical guarantee on the coverage level is provided to the effect that the probability of the true values lying in prediction intervals is $1-\alpha$. However, due to the lack of exchangeability in time series data it is generally not possible to obtain such strong guarantees. As discussed in \cite{Lin_2022}, for longitudinal data there are two types of coverage. \emph{Cross-sectional coverage} concerns the level of coverage over the group dimension $g$, whereas \emph{longitudinal coverage} concerns the level of coverage of the temporal dimension $t$. 

\begin{definition}[Asymptotic cross-sectional coverage] \label{def:cscov} We say that the conformal intervals $\widehat{C}_{t-1}(X_t^{(g)})$ have \emph{asymptotic cross-sectional coverage} if, for all $\varepsilon > 0$, there exists $T_0$ such that 
\[
	\Pr(Y_t^{(g)} \in \widehat{C}_{t-1}(X_t^{(g)})) > 1-\alpha - \varepsilon,
\]
for all $t > T_0$ and $g\in G$.
\end{definition}

\begin{definition}[Asymptotic longitudinal coverage] \label{def:longcov} We say that the conformal intervals $\widehat{C}_{t-1}(X_t^{(g)})$ have \emph{asymptotic longitudinal coverage} for a group $g$, if
\[
	\Pr(Y_t^{(g)}\in \widehat{C}_{t-1}(X_t^{(g)})\mid X_t^{(g)}) \to 1 - \alpha\ \text{uniformly as $T\to\infty$.}
\]
\end{definition}

Cross-sectional coverage is marginal over the groups for a fixed given (large enough) time-point; on the other hand, longitudinal coverage is asymptotic in $t$ and conditional over the temporal dimension. As in \cite{Lin_2022}, we make the reasonable assumption that the groups are exchangeable. As a result, we are able to obtain in Theorem~\ref{thm:cscov} an asymptotic cross-sectional coverage guarantee; note however that this is weaker than the corresponding cross-sectional coverage guarantee in \cite[Theorem 3.1]{Lin_2022} which is finite-sample. We also prove asymptotic longitudinal coverage in Theorem~\ref{thm:lpci_cov} by adapting the argument of \cite[Theorem 4.4]{Xu_2022}.

\subsection{\LPCI algorithm}

Longitudinal Predictive Conformal Inference (\texttt{LPCI}) is a conformal prediction method for longitudinal data that extends the \SPCI algorithm of \cite{Xu_2022}. We model the longitudinal data as 
\[
	Y_t^{(g)} = \hat{f}(X_t^{(g)}) + \hat{\epsilon}_t^{(g)},
\]
where $\hat{f}$ is a trained regression model. We assume that $X_t^{(g)}$ contains a group identifier feature alongside other features. This setup is equivalent to a fixed-effects model. The point predictor $\hat{f}$ is being trained on the entire dataset so as to leverage inter-group dependencies and data more efficiently. This distinguishes the method from training \SPCI independently on $G$ univariate time series. In such an approach, each independent \SPCI model has data available only for a single group's time series; this may be quite small and, as we show in Section~\ref{sec:results}, can lead to poor performance of \SPCI. 

Like \SPCI we make use of a past window of residual errors to make quantile predictions and form future prediction intervals. In contrast to \SPCI we create exponentially-weighted means of these residuals. Suppose we are at time $t$ and, for a fixed window size $w\geq 1$, we have observed $w$ values for group $g$ so far. We let
\[
	\mathcal{E}_{t, w}^{(g)} := (\bar{\epsilon}_{t-1}^{(g)}, \dots, \bar{\epsilon}_{t-w}^{(g)})\in \mathbb{R}^w,
\]
denote the lagged window of exponentially-weighted mean residuals for group $g$, where
\[
	\bar{\epsilon}_{k}^{(g)} = k^{-1}\sum_{i = 1}^{k} \gamma^{k-i}\hat{\epsilon}_i^{(g)},
\]
and $\gamma\in[0, 1]$ is the smoothing parameter. Let $F(z\mid \mathcal{E}_{t, w}^{(g)}) = \Pr(\hat{\epsilon_t}^{(g)}\leq z\mid \mathcal{E}_{t, w}^{(g)})$ be the unknown distribution function of the current residual. The true $p$th quantile of the residual is defined by
\begin{equation} \label{eq:trueQ}
	Q_{t, p}^{(g)} := \inf\{e\in\mathbb{R}\mid F(e\mid \mathcal{E}_{t, w}^{(g)})\geq p\},
\end{equation}
where $p\in(0, 1)$; as discussed in \cite{Meinshausen_2006}, for continuous distribution functions, the quantiles are defined by the property $F(Q_{t, p}^{(g)}\mid  \mathcal{E}_{t, w}^{(g)}) = p$ and so we have
\begin{equation}\label{eq:true_coverage}
F(Q_{t, 1-\alpha/2}^{(g)}\mid \mathcal{E}_{t, w}^{(g)}) - F(Q_{t, \alpha/2}^{(g)}\mid \mathcal{E}_{t, w}^{(g)}) = 1-\alpha,
\end{equation}
where $\alpha\in(0, 1)$. The main idea behind both \SPCI and \LPCI is to train a model whose quantile estimates $\widehat{Q}_{t, p}$ converge uniformly to the true quantile as the training data increases. In the limit, intervals constructed using $\widehat{Q}_{t, 1-\alpha/2}$ and $\widehat{Q}_{t, \alpha/2}$ will contain the true value with probability $1-\alpha$ by virtue of Eq.~(\ref{eq:true_coverage}), this is shown in more detail in Section~\ref{sec:theory}. The prediction interval in \LPCI is constructed as
\begin{equation} \label{eq:pi}
	\widehat{C}_{t-1}(X_t^{(g)}) := \left[\hat{Y}_t^{(g)} + \widehat{Q}_{t, \beta}^{(g)}, \hat{Y}_t^{(g)} + \widehat{Q}_{t, 1-\alpha + \beta}^{(g)}\right],
\end{equation}
where $\beta\in[0, \alpha]$ minimises the interval width of $(1-\alpha)$-quantiles:
\begin{equation}\label{eq:beta}
	\beta = \argmin_{p\in[0, \alpha]}\left\{\widehat{Q}_{t, 1-\alpha + p}^{(g)} - \widehat{Q}_{t, p}^{(g)}\right\}.
\end{equation}

To estimate the quantiles we model the past residuals as a fixed-effects model by fitting a quantile random forest regressor on features containing $\mathcal{E}_{t, w}^{(g)}$ and group identifiers, with current residual $\hat{\epsilon}_t$ as the target. At the first date in the testing time-point (either $t = 0$ or $t = T+1$ depending on the setting), there exist $T$ past residuals across groups in $G_{\train}$. We create a training dataset of size $(\tilde{T}|G_{\train}|)\times (w + 1)$, where $\tilde{T} := T - w$, by defining samples and labels, respectively, for $t'=1, \dots, \tilde{T}$, as 
\begin{align}\label{eq:qrf_training_data}
	\begin{split}
	\widetilde{X}_{t}^{(g)} :&= \{(\mathcal{E}_{t + w, w}^{(g)}, g)\}\in\mathbb{R}^{w + 1},  \\
	\widetilde{Y}_{t}^{(g)} :&= \hat{\epsilon}_{t+w}^{(g)}\in\mathbb{R}.
	\end{split}
\end{align}
After each new time-point in the testing data, $|G_{\test}|$ new residuals are observed and the quantile random forest is subsequently retrained on data including the new observations from $G_{\test}$. In the cross-sectional setting, for the first $w$ time-points of the testing period we do not have observed residuals for the test groups. In this case we create dummy residuals of zeros. The full algorithm for \LPCI is given in Algorithm~\ref{alg:LPCI}.

\begin{algorithm}[t]
\caption{Longitudinal Predictive Conformal Inference (\texttt{LPCI}).}
\label{alg:LPCI}
\begin{algorithmic}[1]
\Statex \textbf{Require:} Training data $\{X_t^{(g)}, Y_t^{(g)}\mid g\in G_{\train}, 1\leq t\leq T\}$, where $X_t^{(g)}\in\mathbb{R}^{d}$ and $Y_t^{(g)}\in\mathbb{R}$; regression training algorithm $\mathcal{A}$; significance level $\alpha$; quantile regression training algorithm $\mathcal{Q}$; window size $w$; prediction groups $G_{\test}$ and prediction time-frame $T_{\test}$ ($1\leq t\leq T$ if cross-sectional; $t > T$ if longitudinal).
\Statex \textbf{Output:} Point predictions $\hat{Y}_t^{(g)}\in\mathbb{R}$ and prediction intervals $\hat{C}_{t-1}(X_t^{(g)})\in\mathbb{R}^{2}$, for $g\in G_{\test}$ and all $t\in T_{\test}$.
	\State Train a point predictor $\hat{f} = \mathcal{A}(\{X_t^{(g)}, Y_t^{(g)}\}_{g, t})$.
	\State Use $\hat{f}$ to obtain a panel dataset of residuals $\{\hat{\epsilon}_t^{(g)}\}_{g, t}$ as described in Eq. (\ref{eq:panel_eps}).
	\State Form quantile regression training data $\{\widetilde{X}_{t}^{(g)}, \widetilde{Y}_{t}^{(g)} \mid g\in G_{\train}, 1\leq t\leq \tilde{T}\}$ as described in Eq. (\ref{eq:qrf_training_data}), where $\tilde{T} = T - w, \widetilde{X}_{t}^{(g)}\in\mathbb{R}_{w + 1}, \widetilde{Y}_{t}^{(g)}\in\mathbb{R}$.
    \For {$t' \in T_{\test}$}
	\For {$h\in G_{\test}$}
    		\State Train the quantile regressor $\widehat{\mathcal{Q}} = \mathcal{Q}(\{\widetilde{X}_{t}^{(g)}, \widetilde{Y}_{t}^{(g)}\}_{g, t})$ on the latest residual data.
		\State Obtain the point prediction $\hat{Y}_t^{(h)} = \hat{f}(X_{t'}^{(h)})$.
		\State Form the prediction intervals $\widehat{C}_{t'-1}(X_{t'}^{(h)})$ as described in Eq.~(\ref{eq:pi}) by taking quantile estimates on the latest residual data $\widehat{Q}_{t', p}^{(h)} = \widehat{\mathcal{Q}}(\widetilde{X}_{t'-w}^{(h)})$.
		\State Obtain the latest residuals after observation of true value $\hat{\epsilon}_{t'}^{(h)} = Y_{t'}^{(h)} - \hat{Y}_{t'}^{(h)}$.
		\State Update the training data for the quantile random forest by adding the new samples $\widetilde{X}_{t' - w}^{(h)}$ and $\widetilde{Y}_{t'-w}^{(h)}$. 
    	\EndFor 
	\EndFor
\State\Return $\{\hat{Y}_{t'}^{(h)}, \widehat{C}_{t'-1}(X_{t'}^{(h)})\mid t'\in T_{\test}, h\in G_{\test}\}$.
\end{algorithmic}
\end{algorithm}

\subsection{Theory}
\label{sec:theory}

We show that \texttt{LPCI} obtains asymptotic marginal cross-sectional and conditional longitudinal coverage as the number of training samples of the quantile random forest tends to $\infty$. This limit can be achieved by either taking the number of time-points $T\to\infty$ or the number groups $G\to\infty$ corresponding. However, since the number of groups (in both the cross-sectional and longitudinal case) is generally kept fixed, we only consider the limit in the number of time-points.  All proofs can be found in Appendix~\ref{sec:proofs}.

First we prove asymptotic longitudinal conditional coverage (Definition~\ref{def:longcov}). Our argument follows that of \cite{Xu_2022} with some changes and additional assumptions. The aim is to prove that
\begin{equation}\label{eq:wts}
	\Pr\left(Y_t^{(g)}\in\widehat{C}_{t-1}(X_t^{(g)})\ \big| \ X_t^{(g)}\right)\to 1-\alpha\ \text{uniformly as $T\to\infty$}.
\end{equation}
By definition of the interval, Eq.~(\ref{eq:pi}), we have
\begin{equation} \label{eq:pi_prob}
	\Pr(Y_t^{(g)}\in\widehat{C}_{t-1}(X_t^{(g)})\mid X_t^{(g)}) = F(\widehat{Q}_{t, 1-\alpha+\beta}^{(g)}\mid\mathcal{E}_{t, w}^{(g)}) - F(\widehat{Q}_{t, \beta}^{(g)}\mid\mathcal{E}_{t, w}^{(g)}).
\end{equation}
If we have uniform convergence of the longitudinal quantile random forest to the true quantile: 
\begin{equation}\label{eq:qrf_uniform_conv}
	\sup_{p\in[0, 1]}|\widehat{Q}_{t, p}^{(g)} - Q_{t, p}^{(g)}|\to 0\ \text{as $T\to\infty$},
\end{equation}
then through Eq.~(\ref{eq:true_coverage}), we arrive at the asymptotic guarantee of Eq. (\ref{eq:wts}). So it suffices to prove Eq.~(\ref{eq:qrf_uniform_conv}).

First, consider a \emph{random forest regressor} (non-quantile) trained on the training data $\{(\widetilde{X}_{t}^{(g)}, \widetilde{Y}_{t}^{(g)})\mid g\in G_{\train}, 1\leq t\leq \tilde{T}\}$ containing the residuals. This random forest estimates a conditional cumulative distribution function $\widehat{F}(z\mid x)$ of the residuals. The idea is that if the estimated distribution $\widehat{F}(z\mid x)$ converges uniformly to the true (unknown) distribution of the residuals $F(z\mid x)$, then actually the estimated quantiles of the corresponding \emph{quantile} random forest converge uniformly to the true quantiles --  this is \cite[Proposition 4.2]{Xu_2022} and it holds for our setting as well. So we need only focus on the random forest's distribution function. 

We assume that, for each $t$ and each $g$, the support of the features are contained in a compact space $\text{Supp}(\widetilde{X}_t^{(g)})\subseteq\mathbb{B}\subseteq\mathbb{R}^{d}$. Each parameterised tree $T(\theta)$ in the random forest is trained such that each leaf of the tree is associated with a rectangular subspace of $\mathbb{B}$. These subspaces are disjoint and cover $\mathbb{B}$ so that, for every $x\in\mathbb{B}$ there exists a unique leaf $\ell(x, \theta)$ such that $x\in R_{\ell(x, \theta)}$. Assume that the random forest has parameterised trees $\{T(\theta_k)\}_{k=1}^K$ which have been trained on the training data. Given $x\in\mathbb{B}$, we can give an explicit form of the conditional distribution function of the quantile random forest through weighting past observations as follows:
\begin{align}
	L(x, \theta) :&= \#\{(g, t)\in\{1, \dots, |G_{\train}|\}\times\{1, \dots, \tilde{T}\}\mid \widetilde{X}_t^{(g)}\in R_{\ell(x, \theta)}\}, \label{eq:node_size} \\
	w_t(x, \theta) :&= L(x, \theta)^{-1}\mathbbm{1}(\widetilde{X}_t^{(g)}\in R_{\ell(x, \theta)}), \label{eq:weight} \\
	w_t(x) :&= K^{-1}\sum_{k=1}^K w_t(x, \theta_k). \label{eq:ens_weight}
\end{align}
The value $L(x, \theta)$ counts the total node size of the leaf $\ell(x, \theta)$ across all observed groups and time-steps. The value $w_t(x, \theta)$ weights a relevant leaf by its node-size and finally $w_t(x)$ takes the average across ensemble trees. We have that 
\[
	\sum_{g\in G_{\train}}\sum_{t=1}^{\tilde{T}}w_t(x) = 1.
\]
The cumulative distribution function of the estimates of the \emph{random forest} trained on $\{(\widetilde{X}_t^{(g)}, \widetilde{Y}_t^{(g)})\}_{g, t}$, conditioned on some input $\widetilde{X}_{k}^{(h)}$, is given \citep{Meinshausen_2006} by
\begin{align}\label{eq:qrf_dist_f}
	\widehat{F}(z\mid\widetilde{X}_{k}^{(h)} = x) :&= \sum_{g\in G_{\train}}\sum_{t=1}^{\tilde{T}} w_t(x)\mathbbm{1}(\widetilde{Y}_t^{(g)}\leq z).
\end{align}

\begin{proposition} \label{prop:qrf_conv} Under Assumptions A.1--A.5 of \cite{Xu_2022} and our additional Assumption \ref{assumption:a6} the estimated conditional distribution function $\widehat{F}(z\mid x)$ (Eq.~(\ref{eq:qrf_dist_f})) for the random forest trained on $|G_{\train}| \cdot\tilde{T}$ samples converges uniformly in $z$ for each $x\in\mathbb{B}$:
\[
	\sup_{z\in\mathbb{R}}|\widehat{F}(z\mid x) - F(z\mid x)|\to 0\ \text{in probability as $\tilde{T}\to\infty$}.
\]
\end{proposition}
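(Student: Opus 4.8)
The plan is to follow the argument of \cite{Xu_2022}, which itself adapts Meinshausen's consistency theory for quantile regression forests \citep{Meinshausen_2006}, and to isolate the one place where the longitudinal (panel) structure demands a new step. The estimator $\widehat{F}(z\mid x)$ of Eq.~(\ref{eq:qrf_dist_f}) is a weighted empirical distribution function with data-dependent weights $w_t(x)$ from Eq.~(\ref{eq:ens_weight}) that sum to one and are supported on the observations $\widetilde{X}_t^{(g)}$ lying in the same leaf as $x$. I would establish the claim in two stages: first pointwise convergence $\widehat{F}(z\mid x)\to F(z\mid x)$ in probability for each fixed $z$, and then upgrade this to uniformity over $z\in\mathbb{R}$.

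For the pointwise stage I would decompose $\widehat{F}(z\mid x) - F(z\mid x)$ into a noise term $\sum_{g,t} w_t(x)(\mathbbm{1}(\widetilde{Y}_t^{(g)}\leq z) - F(z\mid \widetilde{X}_t^{(g)}))$ and a bias term $\sum_{g,t} w_t(x)(F(z\mid \widetilde{X}_t^{(g)}) - F(z\mid x))$, using that the weights sum to one and that $\mathbb{E}[\mathbbm{1}(\widetilde{Y}_t^{(g)}\leq z)\mid \widetilde{X}_t^{(g)}] = F(z\mid \widetilde{X}_t^{(g)})$. The bias term is handled as in the univariate case: every observation carrying positive weight lies in the leaf $R_{\ell(x,\theta)}$, whose diameter shrinks to zero under Assumptions A.1--A.5 of \cite{Xu_2022}, so the continuity of $F(z\mid\cdot)$ forces $F(z\mid \widetilde{X}_t^{(g)})$ uniformly close to $F(z\mid x)$, and averaging against weights summing to one sends the term to zero. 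The noise term is a weighted sum of mean-zero summands whose fluctuation is controlled by $\max_{g,t} w_t(x)$, and this maximum vanishes once the leaf node sizes $L(x,\theta)$ of Eq.~(\ref{eq:node_size}) grow.

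The main obstacle, and the role of Assumption~\ref{assumption:a6}, lies in this noise term. In the single-series setting of \cite{Xu_2022} the dependence is purely temporal and is tamed by their mixing-type hypotheses, whereas here the sum ranges over both the group index $g$ and the time index $t$, so one must in addition control the \emph{cross-sectional} covariances $\Cov(\mathbbm{1}(\widetilde{Y}_t^{(g)}\leq z), \mathbbm{1}(\widetilde{Y}_{t'}^{(g')}\leq z))$ between distinct groups $g\neq g'$. I would invoke Assumption~\ref{assumption:a6} to bound this cross-group dependence, so that the second moment of the noise term remains dominated (up to a constant) by $\max_{g,t} w_t(x)$; Chebyshev's inequality then yields convergence in probability. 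Showing that aggregating over the panel does not inflate the variance beyond this bound is the delicate part of the argument.

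Finally, I would convert the pointwise statement into the uniform one by the standard monotonicity argument (P\'olya's theorem): both $\widehat{F}(\cdot\mid x)$ and the continuous limit $F(\cdot\mid x)$ are monotone with range in $[0,1]$, so one fixes a finite grid $z_1<\dots<z_m$ along which the increments of $F(\cdot\mid x)$ are below a prescribed $\varepsilon$, applies the pointwise result at each $z_j$, and sandwiches $\widehat{F}(z\mid x)$ between neighbouring grid values for intermediate $z$. Taking $\varepsilon\to 0$ then gives $\sup_{z\in\mathbb{R}}|\widehat{F}(z\mid x) - F(z\mid x)|\to 0$ in probability, as required.
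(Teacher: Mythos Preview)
Your proposal is correct and follows essentially the paper's route: decompose into bias plus noise, kill the bias via leaf shrinkage and continuity, kill the noise via Chebyshev with Assumption~\ref{assumption:a6} handling the new cross-group covariances, then upgrade to uniformity by monotonicity. One technical wrinkle is worth flagging. The paper (following Meinshausen) passes through the probability integral transform $U_t^{(g)} := F(\widetilde{Y}_t^{(g)}\mid\widetilde{X}_t^{(g)})$ and centres the noise term at the \emph{deterministic} threshold $F(z\mid x)$, writing it as $\sum_{g,t} w_t(x)\,\mathbbm{1}(U_t^{(g)}\leq F(z\mid x)) - F(z\mid x)$; the variance of this sum then splits cleanly into within-group terms, across-group across-time terms, and across-group \emph{same-time} terms, the last of which is exactly what Assumption~\ref{assumption:a6} is phrased to bound. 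Your decomposition instead centres each summand at the random $F(z\mid\widetilde{X}_t^{(g)})$, so Assumption~\ref{assumption:a6} as stated (fixed threshold $F(z\mid x)$ inside the indicators) does not apply verbatim to your noise term; you would either switch to the PIT decomposition or reword the assumption for random thresholds. This is a minor adjustment, not a gap in the strategy, and your explicit P\'olya step for uniformity in $z$ is a detail the paper leaves to the cited references.
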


\begin{theorem}[Asymptotic longitudinal conditional coverage of \texttt{LPCI}]\label{thm:lpci_cov} Under the same assumptions of Proposition \ref{prop:qrf_conv}, we have, for any $\alpha\in(0, 1)$ and any $g\in G$, that
\begin{equation}\label{eq:aymp_cov}
	|\Pr(Y_t^{(g)}\in\widehat{C}_{t-1}(X_t^{(g)})\mid X_t^{(g)}) - (1-\alpha)| \to 0\  \text{in probability as $T\to\infty$}.
\end{equation}
\end{theorem}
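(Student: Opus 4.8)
The plan is to build directly on the reduction already set out in the excerpt: by Eq.~(\ref{eq:pi_prob}) the conditional coverage equals $F(\widehat{Q}_{t,1-\alpha+\beta}^{(g)}\mid\mathcal{E}_{t,w}^{(g)}) - F(\widehat{Q}_{t,\beta}^{(g)}\mid\mathcal{E}_{t,w}^{(g)})$, so it suffices to control how far the random-forest quantiles $\widehat{Q}_{t,p}^{(g)}$ sit from the true quantiles $Q_{t,p}^{(g)}$. First I would invoke Proposition~\ref{prop:qrf_conv}, which gives uniform-in-$z$ convergence in probability of the estimated distribution function $\widehat{F}(z\mid x)$ to the true $F(z\mid x)$. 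Feeding this into \cite[Proposition 4.2]{Xu_2022} --- which states that uniform convergence of the conditional distribution function forces uniform convergence of the induced quantiles --- yields exactly the target Eq.~(\ref{eq:qrf_uniform_conv}), namely $\sup_{p\in[0,1]}|\widehat{Q}_{t,p}^{(g)} - Q_{t,p}^{(g)}|\to 0$ in probability as $T\to\infty$.

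With Eq.~(\ref{eq:qrf_uniform_conv}) in hand, the second step is a triangle-inequality decomposition of the coverage gap. For the data-dependent $\beta\in[0,\alpha]$ of Eq.~(\ref{eq:beta}), abbreviating $\mathcal{E} = \mathcal{E}_{t,w}^{(g)}$, I would write
\begin{align*}
F(\widehat{Q}_{t,1-\alpha+\beta}^{(g)}\mid\mathcal{E}) - F(\widehat{Q}_{t,\beta}^{(g)}\mid\mathcal{E}) &= \big[F(\widehat{Q}_{t,1-\alpha+\beta}^{(g)}\mid\mathcal{E}) - F(Q_{t,1-\alpha+\beta}^{(g)}\mid\mathcal{E})\big] \\
&\quad + \big[F(Q_{t,1-\alpha+\beta}^{(g)}\mid\mathcal{E}) - F(Q_{t,\beta}^{(g)}\mid\mathcal{E})\big] \\
&\quad + \big[F(Q_{t,\beta}^{(g)}\mid\mathcal{E}) - F(\widehat{Q}_{t,\beta}^{(g)}\mid\mathcal{E})\big].
\end{align*}
The middle bracket is identically $(1-\alpha+\beta)-\beta = 1-\alpha$ by the continuous-quantile identity $F(Q_{t,p}^{(g)}\mid\mathcal{E})=p$ underlying Eq.~(\ref{eq:true_coverage}), and crucially this holds for every realised value of $\beta$. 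It therefore remains only to show that the two outer brackets vanish in probability.

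For these outer terms I would use that the feature support sits in the compact set $\mathbb{B}$, on which $F(\cdot\mid x)$ is uniformly continuous; combined with Eq.~(\ref{eq:qrf_uniform_conv}) this gives, for each fixed $p$, that $|F(\widehat{Q}_{t,p}^{(g)}\mid\mathcal{E}) - p|\to 0$ in probability. Since the two indices $1-\alpha+\beta$ and $\beta$ always lie in $[0,1]$, each outer bracket is dominated by $\sup_{p\in[0,1]}|F(\widehat{Q}_{t,p}^{(g)}\mid\mathcal{E}) - p|$, which converges to $0$ in probability uniformly over the admissible indices. Assembling the three pieces gives $|\Pr(Y_t^{(g)}\in\widehat{C}_{t-1}(X_t^{(g)})\mid X_t^{(g)}) - (1-\alpha)|\to 0$ in probability, as required.

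The step I expect to be the main obstacle is the uniform handling of the random $\beta$: because $\beta$ is itself a functional of the estimated quantiles, one cannot simply pass to a limit at a fixed index, and the argument genuinely needs the \emph{full} uniform-in-$p$ convergence of Eq.~(\ref{eq:qrf_uniform_conv}) together with the uniform continuity of $F$ on $\mathbb{B}$ in order to convert quantile convergence into distribution-function convergence. A secondary subtlety is verifying that \cite[Proposition 4.2]{Xu_2022} transfers without change to the fixed-effects, group-indexed features used here, which is precisely where Proposition~\ref{prop:qrf_conv} and Assumption~\ref{assumption:a6} do the work.
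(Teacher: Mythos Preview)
Your proposal is correct and follows essentially the same route as the paper: invoke Proposition~\ref{prop:qrf_conv} to get uniform convergence of $\widehat{F}$, pass to uniform quantile convergence via the Xu--Xie proposition, and then translate this into convergence of the coverage probability. The paper's own proof is terser at the final step, appealing directly to the continuous mapping theorem together with \cite[Assumption A.4]{Xu_2022} (the discontinuity set of $F$ has measure zero), whereas you spell out the triangle-inequality decomposition and are more explicit about why the random $\beta$ causes no trouble. One small slip: your justification that ``$F(\cdot\mid x)$ is uniformly continuous'' because the feature support lies in the compact set $\mathbb{B}$ conflates the $x$-domain with the $z$-domain; $\mathbb{B}$ is the support of the \emph{features}, not of the residuals, so compactness of $\mathbb{B}$ says nothing about the regularity of $z\mapsto F(z\mid x)$. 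The cleanest fix is either to invoke Assumption~A.4 and the continuous mapping theorem as the paper does, or to bound $|F(\widehat{Q}_{t,p}^{(g)}\mid\mathcal{E})-p|$ directly by $\sup_z|\widehat{F}(z\mid\mathcal{E})-F(z\mid\mathcal{E})|$ using the defining property of $\widehat{Q}$, which already gives the needed uniformity in $p$ without any continuity hypothesis on $F$.
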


\begin{theorem}[Asymptotic cross-sectional marginal coverage of \LPCI]\label{thm:cscov} Let $\varepsilon > 0$ be given. Under the same assumptions of Proposition \ref{prop:qrf_conv} and the additional assumption that the data $\{(X_t^{(g)}, Y_t^{(g)})\}_{g, t}$ is exchangeable in $g$, there exists $T_0$ such that for any $t > T_0$ and any $\alpha\in (0, 1)$, we have 
\[
	\Pr(Y_t^{(h)}\in \widehat{C}_{t-1}(X_t^{(h)})) > 1-\alpha - \varepsilon,
\]
for any $h\in G_{\test}$.
\end{theorem}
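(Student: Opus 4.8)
\section*{Proof proposal for Theorem~\ref{thm:cscov}}

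The plan is to reduce the marginal (cross-sectional) coverage of a test group to the conditional (longitudinal) coverage already controlled by Theorem~\ref{thm:lpci_cov}, and then to pass from a convergence-in-probability statement about the conditional coverage to a numerical bound on its expectation by bounded convergence. First I would write the marginal coverage of a test group $h\in G_{\test}$ as the expectation of its conditional coverage, via the tower property:
\[
	\Pr\left(Y_t^{(h)}\in\widehat{C}_{t-1}(X_t^{(h)})\right) = \mathbb{E}\left[\Pr\left(Y_t^{(h)}\in\widehat{C}_{t-1}(X_t^{(h)})\mid X_t^{(h)}\right)\right],
\]
where the outer expectation is over $X_t^{(h)}$ and the randomness in the fitted models. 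Writing $Z_T := \Pr(Y_t^{(h)}\in\widehat{C}_{t-1}(X_t^{(h)})\mid X_t^{(h)})$ for the conditional coverage, note that $0\leq Z_T\leq 1$ deterministically.

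Second, I would use exchangeability in $g$ to justify that the conditional coverage guarantee of Theorem~\ref{thm:lpci_cov} -- and the underlying uniform quantile convergence coming from Proposition~\ref{prop:qrf_conv} -- applies to the \emph{unseen} test group $h$, not merely to the groups used to fit $\hat f$ and the quantile random forest. The role of exchangeability here is that the conditional residual law $F(\cdot\mid\mathcal{E}_{t,w}^{(h)})$ of the test group coincides with that of the groups on which the forest is trained, so that $\widehat{F}(z\mid x)$ evaluated at a test-group feature $x$ still converges uniformly to the true $F(z\mid x)$. Consequently $\sup_{p\in[0,1]}|\widehat{Q}_{t,p}^{(h)} - Q_{t,p}^{(h)}|\to 0$, and exactly as in the derivation of Eq.~(\ref{eq:pi_prob}) combined with Eq.~(\ref{eq:true_coverage}) this yields $Z_T\to 1-\alpha$ in probability as $T\to\infty$.

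Third, since $Z_T$ is bounded in $[0,1]$ and converges in probability to the constant $1-\alpha$, the bounded convergence theorem (in its version for convergence in probability) gives $\mathbb{E}[Z_T]\to 1-\alpha$. As $\mathbb{E}[Z_T]$ is precisely the marginal coverage, for the given $\varepsilon>0$ there is a threshold beyond which the marginal coverage exceeds $1-\alpha-\varepsilon$; identifying this threshold with $T_0$ (and restricting to $t>T_0$ so that the test-group window $\mathcal{E}_{t,w}^{(h)}$ is free of the dummy zeros used during the first $w$ burn-in time-points) delivers the claim for every $h\in G_{\test}$ and every $\alpha\in(0,1)$, matching Definition~\ref{def:cscov}.

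The hard part will be Step~2: rigorously transferring the conditional coverage from in-sample groups to the out-of-sample test group. The residuals $\hat\epsilon_t^{(h)}$ are produced by a predictor $\hat f$ that never saw group $h$, and the forest carries a group identifier among its features, so one must argue that exchangeability makes the test group statistically indistinguishable from a training group at the level of the conditional residual law, and that the forest's weighting scheme (Eqs.~(\ref{eq:node_size})--(\ref{eq:ens_weight})) stays consistent when evaluated at a test-group feature. A secondary, more routine subtlety is reconciling the ``$T\to\infty$'' convergence in Theorem~\ref{thm:lpci_cov} with the ``$t>T_0$'' form of Definition~\ref{def:cscov}, which amounts to taking the quantile regressor's sample size large while excluding the burn-in window.
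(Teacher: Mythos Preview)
Your proposal is correct and mirrors the paper's strategy: both arguments write the marginal coverage as the expectation of the conditional coverage, invoke exchangeability in $g$ to transfer the quantile-convergence guarantee of Proposition~\ref{prop:qrf_conv} from training groups to the unseen test group $h$, and then pass from conditional to marginal. The difference is only in the packaging of the last step. You route through Theorem~\ref{thm:lpci_cov} to get $Z_T\to 1-\alpha$ in probability and then apply bounded convergence to obtain $\mathbb{E}[Z_T]\to 1-\alpha$; the paper instead works directly from the quantile convergence, choosing an explicit $\sigma_t>0$ so that $|\widehat{Q}_{t,p}^{(h)}-Q_{t,p}^{(h)}|<\sigma_t$ for $t>T_0$, and then uses monotonicity of $F$ to bound $F(\widehat{Q}_{t,1-\alpha+\beta}^{(h)}\mid\cdot)-F(\widehat{Q}_{t,\beta}^{(h)}\mid\cdot)$ from below by $F(Q_{t,1-\alpha+\beta}^{(h)}-\sigma_t\mid\cdot)-F(Q_{t,\beta}^{(h)}+\sigma_t\mid\cdot)>1-\alpha-\varepsilon$. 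Your version is more modular (it reuses Theorem~\ref{thm:lpci_cov} rather than re-deriving the relevant piece) and handles the randomness of $\widehat{Q}$ more transparently through the bounded-convergence step; the paper's version is more explicit about how the $\varepsilon$ is absorbed. Substantively they are the same proof.
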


\section{Results}\label{sec:results}

In this section, we give results in two experimental settings. In Section~\ref{sec:crossres} we provide results in a cross-sectional experimental setting and in Section~\ref{sec:longitudinal} we provide results in a longitudinal experimental setting. We aim to give empirical results that evidence the theoretical analysis in the previous section. To this end, we verify the following:
\begin{enumerate}
	\item \LPCI observes expected marginal cross-sectional coverage.
	\item \LPCI improves longitudinal coverage rates over literature benchmarks.
	\item \LPCI exhibits high adaptivity of interval width.
\end{enumerate}
With respect to adaptivity of interval widths, the first aim is to construct prediction intervals that are not simply wider than benchmark methods at every data point, as this would artificially improve coverage. Hence, we need to ensure that average interval width is not too large. Additionally, always producing similar interval widths across the board is also generally undesirable -- we want conformal prediction intervals to correctly reflect the uncertainty related to the prediction. Therefore, we usually expect to have a broad range of interval widths widths, see \citet[Section 3.1]{Angelopoulos_2022}. We measure this by measuring the standard deviation of interval widths predicted at each time-point in the test set. Overall, we evaluate $\LPCI$ using the following metrics:
\begin{itemize}
	\item \textbf{Marginal coverage:} the coverage rate averaged across all groups and all time-points. This measures \emph{cross-sectional coverage}.
	\item \textbf{Tail coverage:} the average coverage across the lowest 10\% covered groups. This measures \emph{longitudinal coverage}.
	\item \textbf{Width coefficient of variation (CoV):} measures the standard deviation divided by the mean width, taken across all groups and time-points. This metric penalises widths that have a large mean and narrow distribution, and favours widths with a lower average and broader distribution. This helps to measure \emph{width adaptivity}.
\end{itemize}

\subsection{Cross-sectional Experiments}
\label{sec:crossres}

\textbf{Experimental Setup.} In the cross-sectional setting, we divide our dataset into train and test sets by randomly selecting groups to be entirely contained in either the train or test set, with the temporal dimension being kept the same across both sets. This approach allows us to evaluate the generalisation performance of our model on unseen group categories while maintaining the integrity of the temporal structure in the data.

\textbf{Baselines.} To benchmark \LPCI, we compare against the \TQAB and \TQAE methods from \cite{Lin_2022}. We follow their experimental methodology as mentioned in their paper in order to compare results. We report \TQAB and \TQAE results from \cite{Lin_2022}. We also compare against Conformalized Quantile Regression (\CQR, \cite{Romano_2019}) and a traditional split conformal method (Split) with a tuned random forest regressor as base estimator. We use implementations of \CQR and Split from the MAPIE Python package.\footnote{\href{https://pypi.org/project/MAPIE/}{https://pypi.org/project/MAPIE/}}

\textbf{Datasets.} We perform experiments on two datasets considered by \cite{Lin_2022}: \COVID contains daily cases across 380 local council authorities in the United Kingdom across the month of March 2022; \EEG contains 64 downsampled electroencephalogram readings across one second for 600 patient-trials. Table~\ref{table:datasets} contains experimental details of these datasets.

\begin{table}
\caption{\label{table:datasets} Experimental details for the two tested datasets. $n_x$ and $G_x$ refer to the number of samples and groups in the $x\in\{\train, \test\}$ datasets respectively. $T$ refers to the length of each time series.}
\centering
\begin{tabular}{c | c c | c c | c}
\hline
	& $n_{\train}$ & $n_{\test}$ & $G_{\train}$ & $G_{\test}$ & $T$ \\
\hline
\hline
	$\COVID$ & 9,000 & 2,400 & 300 & 80 & 30\\ \hline
	$\EEG$ & 25,600 & 12,800 & 400 & 200 & 64\\ \hline
\end{tabular}
\end{table}

\textbf{Experimental Details.} Our point predictor is a random forest regression model. This is trained as an 5-fold ensemble model with individual estimators, with groups contained entirely within folds and predictions being made by mean aggregation of the individual learners. We use a single lagged target $Y_{t-1}^{(g)}$ as a feature and label-encoded group identifiers as features. Both the target and lagged target columns are standardised across groups. We tune the hyperparameters of the random forest using a randomised grid search across 5-fold group cross validation splits. 

The significance level is $\alpha = 0.1$.  For training the quantile random forest, we use a window size of 20, yielding a training data consisting of 20 lagged exponentially-weighted averages of past residuals, as described in Section~\ref{sec:methodology}. We also tune the hyperparameters of the quantile random forest using a randomised grid search across 5-fold cross validation splits. As in \cite{Lin_2022}, we report metric values only for the last 20 time-points for each group. A final point to mention in the cross-sectional setting is the need to create dummy residuals for the QRF on the first date, since no historical residuals are available at that point for the test groups; since we are only computing metrics on the final 20 time-points for each group this has limited impact on the results.

\begin{table}[h]
\caption{\label{table:csavgcov} Marginal coverage of \LPCI on the \COVID and \EEG datasets compared to baselines in the cross-sectional setting. Marginal coverages in italics do not satisfy expected coverage rates. Experiments are repeated over five seeds. We report the mean $\pm$ standard deviation across experiments.}
\centering
\begin{tabular}{c | c | c c | c c}
\hline
Dataset & \LPCI & \TQAB & \TQAE & \CQR & Split \\
\hline\hline
\COVID & 0.964 $\pm$ 0.006 & 0.908 $\pm$ 0.015 & 0.917 $\pm$ 0.009 & \textit{0.897 $\pm$ 0.010} & 0.900 $\pm$ 0.018 \\
\hline
\EEG & 0.924 $\pm$ 0.067 & 0.907 $\pm$ 0.015 & 0.906 $\pm$ 0.008 & 0.901 $\pm$ 0.008 & 0.908 $\pm$ 0.004\\
\hline
\end{tabular}
\end{table}

\begin{table}[h]
\caption{\label{table:cstailcov} Tail coverage on the \COVID and \EEG datasets for \LPCI compared to baselines in the cross-sectional setting. Higher values are preferred. The best overall results are \textbf{bolded}; the best results for methods that do not create infinitely-wide intervals are \underline{underlined}. Experiments are repeated over five seeds and we report the mean $\pm$ standard deviation across experiments.}
\centering
\begin{tabular}{c | c | c c | c c}
\hline
Dataset & \LPCI & \TQAB & \TQAE & \CQR & Split \\
\hline\hline
\COVID & \underline{\textbf{0.874 $\pm$ 0.015}}& 0.700 $\pm$ 0.045 & 0.824 $\pm$ 0.013 & 0.779 $\pm$ 0.015 & 0.607 $\pm$ 0.048 \\
\hline
\EEG &\underline{\textbf{ 0.792 $\pm$ 0.007}} & 0.710 $\pm$ 0.022 & \textbf{0.790 $\pm$ 0.012} & 0.711 $\pm$ 0.009 & 0.737 $\pm$ 0.009 \\
\hline
\end{tabular}
\end{table}

\begin{table}[H]
\caption{\label{table:csinvcov} Width CoV for \COVID and \EEG datasets. Higher values are preferred. We do not include the \TQAB or \TQAE methods in this table, since standard deviations are not available from \cite{Lin_2022}, however a comparison of widths alone can be found in Appendix~\ref{appendix:res}. The best results are bolded. Experiments are repeated over five seeds and we report the mean $\pm$ standard deviation across experiments.}
\centering
\begin{tabular}{c  | c | c c}
\hline
Dataset  & \LPCI & \CQR & Split\\
\hline\hline
\COVID  & \textbf{0.536 $\pm$ 0.023}  & 0.391 $\pm$ 0.046 & 0.056 $\pm$ 0.030\\ \hline
\EEG & \textbf{0.275 $\pm$ 0.047} & 0.075 $\pm$ 0.008 & 0.037 $\pm$ 0.014 \\ 
\hline
\end{tabular}
\end{table}

\textbf{Discussion.} Cross-sectional results of \LPCI on \COVID and \EEG datasets for the three metrics described can be found in Tables~\ref{table:csavgcov}, \ref{table:cstailcov}, \ref{table:csinvcov}. We observe \LPCI obtains expected marginal coverage of greater than 0.9 in both cases and gets higher marginal coverage than both \TQAB and \TQAE (Table~\ref{table:csavgcov}). On the \COVID dataset, the same can be said for tail coverage rate (Table~\ref{table:cstailcov}). For the \EEG dataset, \LPCI improves tail coverage over \TQAB and is competitive with \TQAE, however note that this is achieved without the creation of infinitely-wide intervals which occurs in \TQAE. Table~\ref{table:csinvcov} demonstrates that the widths of \LPCI have a greater spread relative to the average width than compared to the \CQR and Split methods. Combined with the improved tail coverage rates from Table~\ref{table:cstailcov}, this shows that \LPCI's intervals are more capable of adapting to harder predictions and suggests that \LPCI exhibits higher width adaptivity than \CQR and Split.

In Figure \ref{fig:ex_intervals} we show some examples of interval widths created by LPCI for the cross-sectional experiment on the \COVID dataset. For LPCI, we saw longitudinal (tail) coverage of 0.874 (Table \ref{table:cstailcov}). Note how, for Barrow-In-Furness and Blenau Gwent, the first half of the month corresponds to relatively flat true and predicted values. Correspondingly LPCI produces narrow intervals, reflecting low uncertainty. In the second half of March 2022, for both of those councils the true values become more volatile, and greater model errors occur. As we would hope to see, there is a widening of LPCI intervals accompanying this phenomenon.

\begin{figure}[h]
	\centering
	\includegraphics[scale=0.28, trim=7cm 7cm 0 7cm, clip]{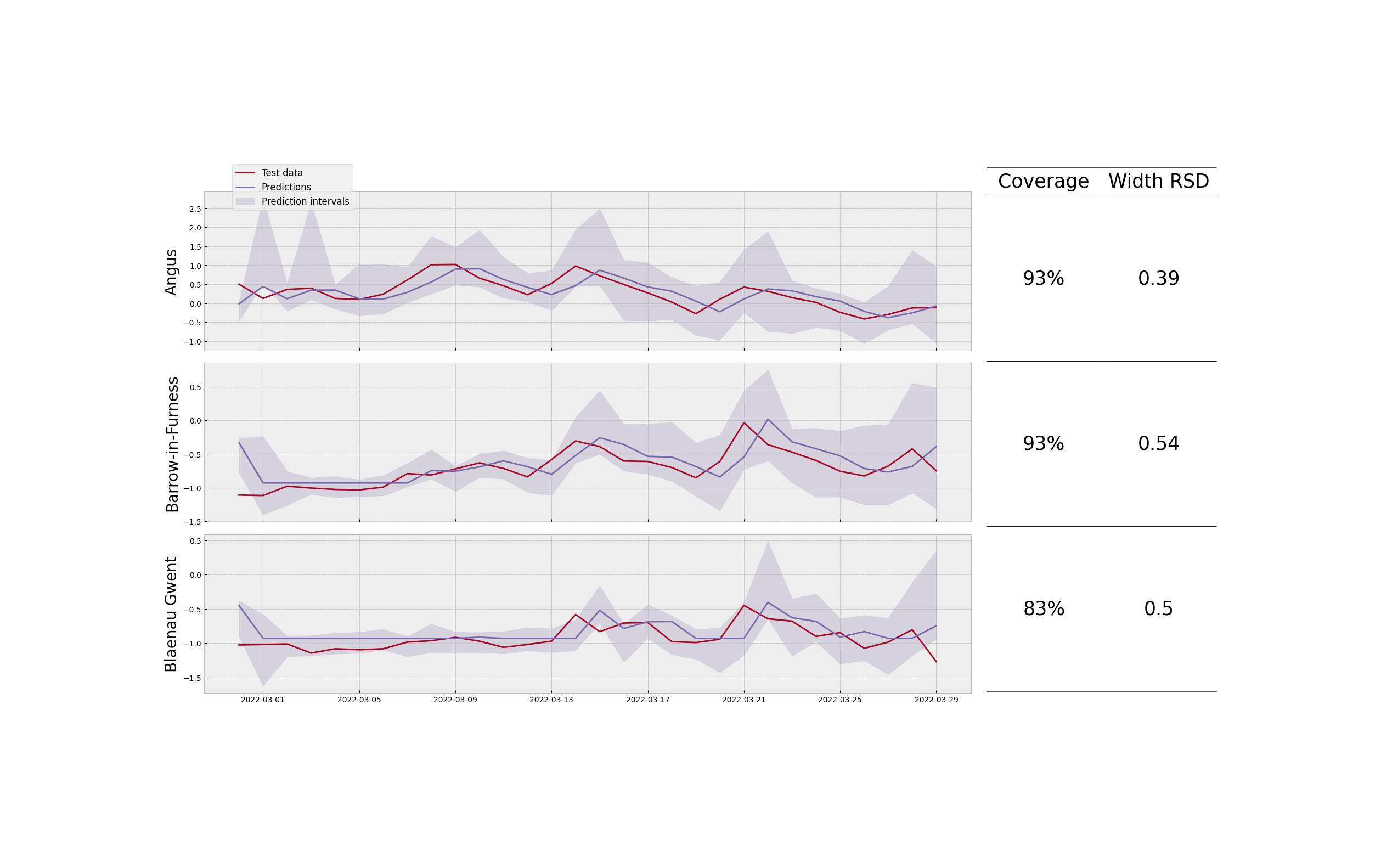}
	\caption{\label{fig:ex_intervals} Examples of \LPCI intervals for 3 councils in the \COVID test set in the cross-sectional experiment. The red line is the observed data, the purple line is the predictions of the underlying model, and the shaded regions are the LPCI intervals. Note that observed values have been normalised.}
\end{figure}

\subsection{Longitudinal Experiments}
\label{sec:longitudinal}

\textbf{Experimental Setup.} In the longitudinal setting, we partitioned our dataset by dividing the data along the time dimension. This allowed us to evaluate the generalisation performance of our model on unseen time steps while ensuring that each group was represented in both the train and test sets.

\textbf{Baselines.} In this case, we are unable to benchmark against \cite{Lin_2022}, since they do not measure results in such a setting. Instead we benchmark against using \SPCI separately on each group as well as \CQR and Split. 

\textbf{Datasets.} We use the \COVID dataset in which we have 380 groups corresponding to local council authorities in the UK. We train on all groups in the month of February 2022 and test on all groups in the month of March 2022. 

\textbf{Experimental Details.} The experiments are almost identical to those in the cross-sectional setting of Section \ref{sec:crossres}, except we have no need to create dummy prediction data for the quantile random forest on the first testing date as historical data exists for all groups.

\begin{table}[h]
\caption{\label{table:longcovid} \LPCI results on the \COVID dataset in the longitudinal experiment setup. Marginal coverages that do not satisfy expected coverage rates are in italics. Up and down arrows indicate whether higher or lower values are preferred for that metric, respectively; best tail coverage is bolded; best width CoV for valid marginal coverages is bolded.  Experiments are repeated over five seeds and we report the mean $\pm$ standard deviation across experiments.}
\centering
\begin{tabular}{c | c | c c c}
\hline
Metric & \LPCI & \SPCI & \CQR & Split \\
\hline
\hline
Marginal coverage & 0.936 $\pm$ 0.004 & \textit{0.657 $\pm$ 0.007} & 0.918 $\pm$ 0.003 & \textit{0.892 $\pm$ 0.001} \\ \hline
Tail coverage $\uparrow$ & \textbf{0.830 $\pm$ 0.005} & 0.539 $\pm$ 0.003 & 0.760 $\pm$ 0.004 & 0.625 $\pm$ 0.001\\  \hline
Width CoV $\uparrow$ & \textbf{0.500 $\pm$ 0.009} & 0.582 $\pm$ 0.007 & 0.304 $\pm$ 0.016 & 0.103 $\pm$ 0.002 \\ \hline
\end{tabular}
\end{table}

\textbf{Discussion.} Table \ref{table:longcovid} contains \LPCI results on the \COVID dataset for the longitudinal setting. Only \LPCI and \CQR obtain expected marginal coverage rates, with \LPCI obtaining slightly better coverage than \CQR. The tail coverage of \LPCI significantly outperforms other methods. 

The width coefficient of variation is higher for \LPCI than it is for \CQR and Split which, as per the discussion in the previous section, indicates a greater width adaptivity for \LPCI. The greater width adaptivity of \LPCI over \CQR is visualised in Figure~\ref{fig:adaptivewidths};  the left plot shows the wider spread of widths for intervals created by \LPCI compared to \CQR, the right plot shows that the group coverage of \LPCI intervals are more concentrated towards higher coverage rates than those of \CQR. This indicates that, for the \COVID dataset, the greater spread of widths for \LPCI is accompanied by improved coverage rates over the worst-covered groups, and therefore that \LPCI is better able to reflect the uncertainty of the model.

The Split method is expected to obtain poor tail coverage, since it does not readily apply to time series data. However, it is interesting to observe the poor performance of \SPCI in this scenario, which does come with longitudinal guarantees for univariate time series. This is the result of two key factors. Firstly we only have 30 training and test points available for each time series and at the final testing date each \SPCI has only seen 60 datapoints in total. This lack of data leads to poor performance on each individual time series. Secondly, \SPCI used in this way is not able to leverage the exchangeability of groups in the dataset. This shows how the \LPCI methodology of modelling a quantile random forest over the entire dataset can help to both model inter-group dependencies and overcome data limitations. 

\begin{figure}[h]
\centering

\includegraphics[scale=0.6, trim=20 0 20 30, clip]{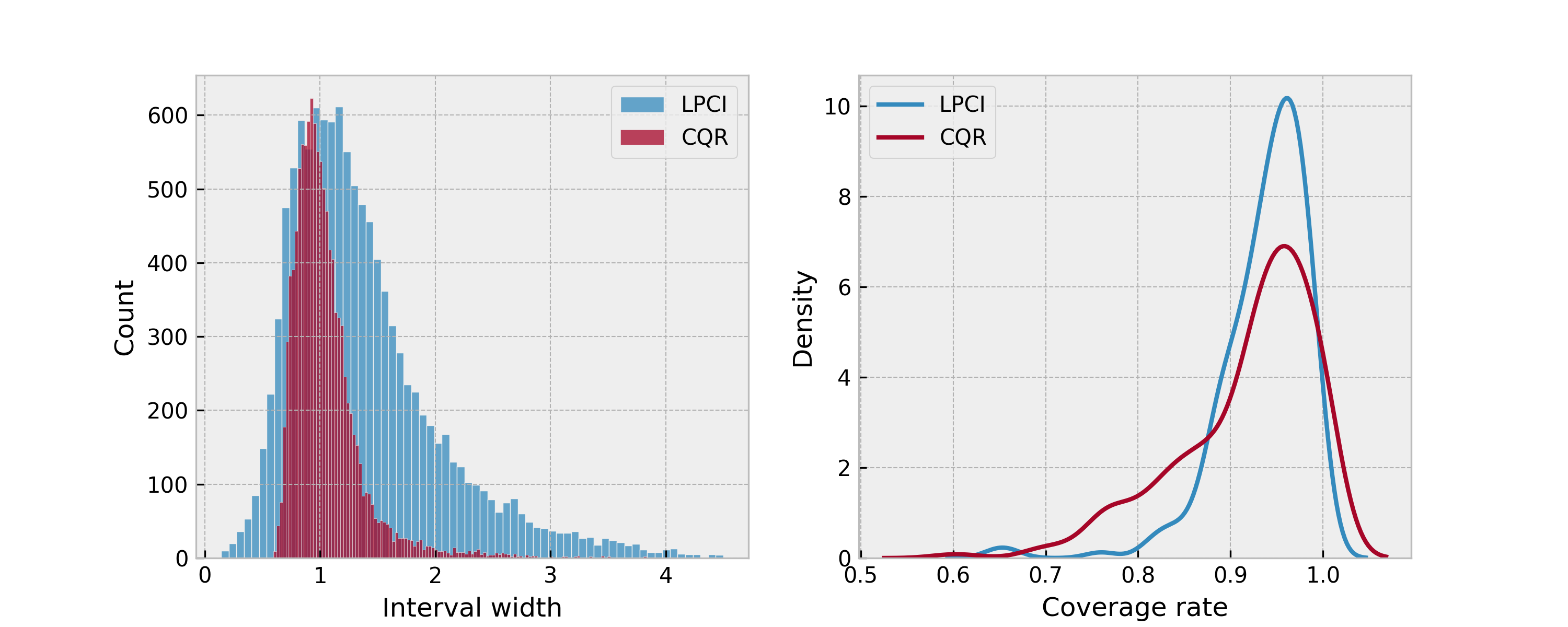}
\caption{\label{fig:adaptivewidths} (left) Distribution of \LPCI widths compared to \CQR widths on the \COVID dataset. We observe a significantly larger spread in both directions for \LPCI than for \CQR. (right) Distribution of coverage rates for each local council authority in the \COVID datasets for \LPCI compared to \CQR.}
\end{figure}

\section{Conclusion}\label{sec:conclusion}

In this paper, we proposed Longitudinal Predictive Conformal Inference (\LPCI) as a distribution-free and model-agnostic uncertainty quantification approach for longitudinal time series data. Our method provides asymptotic theoretical guarantees across both cross-sectional and temporal dimensions. This is done without creating infinitely-wide prediction intervals at any point, which is a first in conformal predictions for longitudinal data. In experiments, we observed that \LPCI was able to improve longitudinal coverage over the current state-of-the-art, whilst also exhibiting a greater adaptivity of widths. Theoretically, however, our framework lacks finite-sample cross-sectional coverage and is only able to guarantee such coverage asymptotically; this could be fixed by combining some ideas from the approach of \cite{Lin_2022}. Moreover, we may be able to leverage more predictive power by looking at alternative quantile regression models, which may help to improve empirical longitudinal coverage even further.

\section*{Acknowledgments}
The authors would like to thank Greig Cowan and Graham Smith of NatWest Group’s Data Science \& Innovation team for the time and support needed to develop this research paper. We would also like to express our sincere gratitude to Greig Cowan for his insightful feedback and constructive suggestions that improved the quality of this paper.

\bibliography{conformal_library}
\bibliographystyle{icml2014}
\appendix
\section{Supplementary Material}
\subsection{Assumptions and proofs}
\label{sec:proofs}
\begin{assumption} \label{assumption:a6} Define $U_t^{(g)} := F(\widetilde{Y}_t^{(g)}\mid \widetilde{X}_t^{(g)})$ as the quantile of the $t$th observation of the $g$th group, condition on the feature $\widetilde{X}_t^{(g)}$ with the property that $U_t\sim U[0, 1]$. If $x\in\text{Supp}(\{X_t^{(g)}\}_{t \geq 1})$, then we assume that there exists a function $\tilde{g}$ such that
\[
	\Cov(\mathbbm{1}(U_t^{(g)}\leq F(z\mid x)), \mathbbm{1}(U_t^{(h)}\leq F(z\mid x)))\leq \tilde{g}(t),
\]
for all $t\geq 1$ and all $g\neq h$. Moreover, $\tilde{g}$ has the following bounded growth
\[
	\lim_{\tilde{T}\to\infty}\left[\int_1^{\tilde{T}}\tilde{g}(u)du\right]/\tilde{T}^2 = 0.
\]
\end{assumption}

Note that Assumption~\ref{assumption:a6} is similar to \cite[Assumption A.1]{Xu_2022}, however here we are ranging over the group axis. It essentially assumes subquadratic growth of the cross-sectional covariance as the time-points tend to $\infty$. Note that if the groups are exchangeable, then it is enough to check this assumption once, since all the covariances of pairwise groups are equal.

\begin{proof}[Proof of Proposition \ref{prop:qrf_conv}]
We follow the proof of \cite[Proposition 4.3]{Xu_2022} and \cite[Theorem 1]{Meinshausen_2006} with minor changes due to the slight difference in the form of the estimated conditional distribution function. Note that \cite[Theorem 1]{Meinshausen_2006} assumes i.i.d observations, which neither we nor \cite{Xu_2022} do. We denote $U_t^{(g)} := F(\widetilde{Y}_t^{(g)}|\widetilde{X}_t^{(g)})$ as the quantile of the $t$th observation of the $g$th group; we have $U_t^{(g)} \sim U[0, 1]$. Following the calculation of \cite{Xu_2022}, we get
\begin{align*}
	|\widehat{F}(z\mid x) - F(z\mid x)|\leq &\overbrace{\left|\sum_{g\in\tilde{G}}\sum_{t=1}^{\tilde{T}} w_t(x)\mathbbm{1}(U_t^{(g)}\leq F(z\mid x)) - F(z\mid x)\right|}^{(a)} +  \\
	& + \sum_{g\in\tilde{G}}\left(\underbrace{\left|\sum_{t=1}^{\tilde{T}}w_t(x)[\mathbbm{1}(U_t^{(g)}\leq F(z\mid \widetilde{X}_t^{(g)})) - \mathbbm{1}(U_t^{(g)}\leq F(z\mid x))]\right|}_{(b)}\right).
\end{align*}
The aim is to show that this converges to zero as $\tilde{T}\to\infty$. Term (b) is the same as it was in \cite{Xu_2022}, where they showed convergence to zero. Hence we need only show that term (a) converges to zero. 

Define $U^{(g)} := \sum_g\sum_t w_t(x)\mathbbm{1}(U_t^{(g)}\leq F(z\mid x))$; since $\sum_g\sum_t w_t(x) = 1$ we have $\mathbb{E}[U^{(g)}] = F(z\mid x)$ and, by Chebyshev's inequality, that 
\begin{align*}
	\Pr\left(\left|\sum_{g\in\tilde{G}}\sum_{t=1}^{\tilde{T}} w_t(x)\mathbbm{1}(U_t^{(g)}\leq F(z\mid x)) - F(z\mid x)\right|\geq \varepsilon\right) \leq \Var(U^{(g)})/\varepsilon^2,
\end{align*}
for any $\varepsilon > 0$. We show that $\Var(U^{(g)})\to 0$ as $\tilde{T}\to\infty$. By properties of variance and covariance on linear combinations we obtain
\begin{align}
\Var(U^{(g)}) = & \sum_{g\in\tilde{G}}\Var\left(\sum_{t=1}^{\tilde{T}}w_t(x)\mathbbm{1}(U_t^{(g)}\leq F(z\mid x))\right)\label{termi} \\
& + \sum_{g\neq h}\sum_{i\neq j}w_i(x)w_j(x)\Cov(\mathbbm{1}(U_i^{(g)}\leq F(z\mid x)), \mathbbm{1}(U_j^{(h)}\leq F(z\mid x))) \label{termii}\\
& + \sum_{g\neq h}\sum_{t=1}^{\tilde{T}}w_t^2(x)\Cov(\mathbbm{1}(U_t^{(g)}\leq F(z\mid x)), \mathbbm{1}(U_t^{(h)}\leq F(z\mid x))). \label{termiii}
\end{align}
The summand of $\sum_g$ in (\ref{termi}) is shown to converge to zero in \cite{Xu_2022}. Likewise, the summand of $\sum_{g\neq h}$ in (\ref{termii}) also converges to zero as shown in \cite{Xu_2022} (this requires \cite[Assumption A.1]{Xu_2022}). Under our additional Assumption \ref{assumption:a6} and exchangeability of the groups, the third term converges to zero as follows:
\begin{align*}
\sum_{t=1}^{\tilde{T}} w_t^2(x) \Cov(\mathbbm{1}(U_t^{(1)}\leq F(z\mid x)), \mathbbm{1}(U_t^{(2)}\leq F(z\mid x)) \leq \int_1^{\tilde{T}} \mathcal{O}\left(\frac{1}{\tilde{T}^2}\right)\tilde{g}(u)du \to 0.
\end{align*}
\end{proof}

\begin{proof}[Proof of Theorem~\ref{thm:lpci_cov}]
As $T\to\infty$, the number of samples in the training data for the quantile random forest tends to $\infty$ as well due to the continual residual updates. In this limit, the conditional distribution function $\widehat{F}(z\mid \mathcal{E}_{t, w}^{(g)})$ of the random forest converges uniformly to the true distribution by Proposition~\ref{prop:qrf_conv}. Hence we get uniform convergence $\widehat{Q}_{t, p} \to Q_{t, p}$ of the estimated quantiles of the quantile random forest to the true quantile by \cite[Proposition 4.3]{Xu_2022}. By \cite[Assumption A.4]{Xu_2022} the points of discontinuity of the distribution function $F$ have measure zero, the proof then follows by the continuous mapping theorem. 
\end{proof}

\begin{proof}[Proof of Theorem~\ref{thm:cscov}] By exchangeability of groups, $F(\widehat{Q}_{t, 1-\alpha+\beta}^{(h)}\mid\mathcal{E}_{t, w}^{(h)}) = F(\widehat{Q}_{t, 1-\alpha+\beta}^{(g)}\mid\mathcal{E}_{t, w}^{(g)})$ for any group $g$; likewise for the other quantile $\widehat{Q}_{\beta}^{(h)}$. Using this and Eq.~(\ref{eq:pi_prob}) we obtain
\begin{align*}
\Pr(\hat{\epsilon}_t^{(h)}\leq \widehat{Q}_{1-\alpha + \beta}^{(h)}) &= \mathbb{E}_h\Pr(Y_t^{(h)}\in C_{t-1}(X_t^{(h)})\mid X_t^{(h)}) \\
	&= \frac{1}{|G|}\sum_{g\in G}\left[ F(\widehat{Q}_{t, 1-\alpha+\beta}^{(g)}\mid\mathcal{E}_{t, w}^{(g)}) - F(\widehat{Q}_{t, \beta}^{(g)}\mid\mathcal{E}_{t, w}^{(g)})\right] \\
	&= F(\widehat{Q}_{t, 1-\alpha+\beta}^{(h)}\mid\mathcal{E}_{t, w}^{(h)}) - F(\widehat{Q}_{t, \beta}^{(h)}\mid\mathcal{E}_{t, w}^{(h)}).
\end{align*}
Now given $\varepsilon > 0$, let $\sigma_t>0$ be such that 
\begin{align*}
	\Pr(Q_{t, 1-\alpha + \beta}^{(h)} - \sigma_t < \hat{\epsilon}_t^{(h)} \leq Q_{t, 1-\alpha + \beta}^{(h)}) &< \varepsilon/2, \\
	\Pr(Q_{t, \beta}^{(h)} < \hat{\epsilon}_t^{(h)} \leq Q_{t, \beta}^{(h)} + \sigma_t) &< \varepsilon/2.
\end{align*}
By Proposition~\ref{prop:qrf_conv} and \cite[Proposition 4.3]{Xu_2022}, there exists $T_0$ such that for all $t > T_0$, we have $|\widehat{Q}_{t, p}^{(h)} - Q_{t, p}^{(h)}| < \sigma_t$. Since the cumulative distribution is non-decreasing, we have $F(Q_{t, p} - \sigma_t) \leq F(\widehat{Q}_{t, p}) \leq F(Q_{t, p} + \sigma_t)$. Hence, for $t > T_0$ we get
\begin{align*}
F(\widehat{Q}_{t, 1-\alpha+\beta}^{(h)}\mid\mathcal{E}_{t, w}^{(h)}) - F(\widehat{Q}_{t, \beta}^{(h)}\mid\mathcal{E}_{t, w}^{(h)}) &\geq F(Q_{t, 1-\alpha+\beta}^{(h)} - \sigma_t\mid\mathcal{E}_{t, w}^{(h)}) - F(Q_{t, \beta}^{(h)} + \sigma_t\mid\mathcal{E}_{t, w}^{(h)}).
\end{align*}
Now we have
\begin{align*}
F(Q_{t, 1-\alpha+\beta}^{(h)} - \sigma_t\mid\mathcal{E}_{t, w}^{(h)}) &= F(Q_{t, 1-\alpha+\beta}^{(h)} \mid\mathcal{E}_{t, w}^{(h)}) - \Pr(Q_{t, 1-\alpha + \beta}^{(h)} - \sigma_t   < \hat{\epsilon}_t^{(h)} \leq Q_{t, 1-\alpha + \beta}^{(h)}) \\ & > 1- \alpha +\beta - \varepsilon/2, \\
F(Q_{t, \beta}^{(h)} + \sigma_t\mid\mathcal{E}_{t, w}^{(h)}) &= F(Q_{t, \beta}^{(h)} \mid\mathcal{E}_{t, w}^{(h)}) + \Pr(Q_{t, \beta}^{(h)} < \hat{\epsilon}_t^{(h)} \leq Q_{t, \beta}^{(h)} + \sigma_t) \\ & < \beta  + \varepsilon/2.
\end{align*}
Therefore
\begin{align*}
F(Q_{t, 1-\alpha+\beta}^{(h)} - \sigma_t\mid\mathcal{E}_{t, w}^{(h)}) - F(Q_{t, \beta}^{(h)} + \sigma_t\mid\mathcal{E}_{t, w}^{(h)}) & > 1-\alpha -\varepsilon.
\end{align*}
\end{proof}
\newpage
\subsection{Additional Results}
\label{appendix:res}

Tables \ref{table:widths} and \ref{table:longwidths} give values for width mean and standard deviation of the various methods considered in Section~\ref{sec:results}.

\begin{table}[h]
\caption{\label{table:widths} The mean and standard deviation of interval widths for \COVID and \EEG datasets. Up and down arrows indicate whether higher or lower values are preferred for that metric, respectively. The best results are bolded. Experiments are repeated over five seeds and we report the mean $\pm$ standard deviation across experiments. *Since \TQAE can create infinitely-wide intervals, these numbers are calculated by replacing infinite widths with 2x the maximum finite width \citep{Lin_2022}.}
\centering
\resizebox{\textwidth}{!}{
\begin{tabular}{c | c | c | c c | c c}
\hline
Dataset & Metric & \LPCI & \TQAB & \TQAE & \CQR & Split\\
\hline\hline
\COVID & Mean $\downarrow$ & 0.980 $\pm$ 0.048  & \textbf{0.755 $\pm$ 0.033} & 1.070* $\pm$ 0.308 & 0.947 $\pm$ 0.069 & 0.930 $\pm$ 0.024\\ \cline{2-7}
 & St. dev. $\uparrow$ & \textbf{0.526 $\pm$ 0.044} & $-$ & $-$ & 0.372 $\pm$ 0.065 & 0.053 $\pm$ 0.029 \\
\hline
\EEG & Mean $\downarrow$ & 1.592 $\pm$ 0.116 & 1.315 $\pm$ 0.039 & 1.585* $\pm$ 0.111 & \textbf{1.395 $\pm$ 0.084} & 1.398 $\pm$ 0.067 \\ \cline{2-7}
& St. dev. $\uparrow$ & \textbf{0.41 $\pm$ 0.046} & $-$ & $-$ & 0.106 $\pm$ 0.014 & 0.053 $\pm$ 0.021\\
\hline
\end{tabular}}
\end{table}

\begin{table}[ht]
\caption{\label{table:longwidths} \LPCI mean and standard deviation of widths on the \COVID dataset in the longitudinal experiment setup. Experiments are repeated over five seeds and we report the mean $\pm$ standard deviation across experiments.}
\centering
\begin{tabular}{c | c | c c c}
\hline
Metric & \LPCI & \SPCI & \CQR & Split \\
\hline
\hline
Mean width $\downarrow$ & 1.379 $\pm$ 0.004 & 0.703 $\pm$ 0.012 & \textbf{1.05 $\pm$ 0.014} & 1.124 $\pm$ 0.003  \\ \hline
St. dev. of width $\uparrow$ & \textbf{0.689 $\pm$ 0.012} & 0.409 $\pm$ 0.005 & 0.30 $\pm$ 0.016 & 0.116 $\pm$ 0.002 \\ \hline
\end{tabular}
\end{table}

\end{document}